\documentclass[letterpaper]{IEEEtran}
\usepackage{amsmath,amsfonts}
\usepackage{amssymb}
\usepackage{algorithmic}
\usepackage{array}
\usepackage[caption=false,font=normalsize,labelfont=sf,textfont=sf]{subfig}
\usepackage{textcomp}
\usepackage{stfloats}
\usepackage{url}
\usepackage{verbatim}
\usepackage{graphicx}
\usepackage{nicefrac}
\usepackage{amsthm} 
\usepackage{etoolbox}
\usepackage{enumitem}
\usepackage{color}

\newtheoremstyle{upright}
{3pt}
{3pt}
{\normalfont}
{}
{\bfseries}
{.}
{ }
{}

\theoremstyle{upright}

\newtheorem{theorem}{Theorem}

\newtheorem{corollary}{Corollary}[theorem]
\newtheorem{remark}{Remark}
\newtheorem{assumption}{Assumption}

\def\BibTeX{{\rm B\kern-.05em{\sc i\kern-.025em b}\kern-.08em
    T\kern-.1667em\lower.7ex\hbox{E}\kern-.125emX}}
\usepackage{balance}
\begin{document}
\title{\LARGE Energy-Optimal Spatial Iterative Learning within a Virtual Tube}
\author{Chen Min, Shuli Lv, Pengda Mao, Huixin Cao, Li Hong, and Quan Quan
\thanks{Chen Min, Shuli Lv, Pengda Mao, Huixin Cao, and Li Hong are with the School of Automation Science and Electrical Engineering, Beihang University, Beijing 100191, China.{\tt\small email:min\_chen@buaa.edu.cn,  lvshuli@buaa.edu.cn, maopengda@buaa.edu.cn, caohuixin@buaa.edu.cn, hongli@buaa.edu.cn}. 

Quan Quan (Corresponding author) is with the School of Automation Science and Electrical Engineering, Beihang University, Beijing 100191, China, and with Tianmushan Laboratory, Beihang University, Hangzhou, 311115. {\tt\small email:qq\_buaa@buaa.edu.cn}. 

This work has been submitted to the IEEE for possible publication. Copyright may be transferred without notice, after which this version may no longer be accessible.
}}


\maketitle

\begin{abstract}
    Due to the limited endurance of embedded energy sources such as lithium-polymer (LiPo) batteries, the flight duration and operational range of unmanned aerial vehicles (UAVs) are severely constrained. Although energy-efficient trajectory planning and control have been widely studied, most existing approaches rely on accurate system models and computationally expensive optimization procedures. This paper proposes a model-free online iterative learning (IL) framework to minimize energy consumption. Without requiring explicit models of UAV dynamics or energy consumption, the proposed method improves energy efficiency while maintaining a low computational cost. The per-iteration computational complexity is $O(n)$, where $n$ denotes the number of path points. In the tested cases, the proposed method is approximately 50--60 times faster than the model-based IPOPT benchmark. Simulation results and real-world flight experiments across multiple UAV platforms validate the effectiveness, computational efficiency, and practical applicability of the proposed approach.

\end{abstract}

\begin{IEEEkeywords}
Iterative Learning (IL), Energy optimization, Model-free.
\end{IEEEkeywords}

\section{Introduction}
    \IEEEPARstart{I}{n} recent years, robots have been widely deployed in applications such as delivery, photography, and rescue missions \cite{quan2017introduction}. 
    Their success largely relies on autonomous navigation, namely the ability to reach target locations efficiently using onboard sensing and computation \cite{siegwart2011introduction}. 
    However, the endurance of unmanned aerial vehicles (UAVs) remains fundamentally constrained by onboard energy storage. 
    Most UAVs rely on lithium-polymer (LiPo) batteries, whose limited energy density typically restricts flight time to less than one hour, significantly limiting mission range and duration.
    
    Due to the strict payload and size constraints of UAV platforms, improving energy efficiency has become a critical design objective \cite{7420610,10414185}. 
    Existing approaches to alleviating endurance limitations can be broadly categorized into improving battery technology or enhancing energy utilization efficiency through algorithmic optimization \cite{Karydis2017EnergeticsIR}. 
    While lightweight structures and high-energy-density batteries (e.g., lithium--sulfur batteries \cite{Seh2016}) have been explored, these solutions have yet to achieve substantial breakthroughs or widespread adoption \cite{7487285}. 
    As a result, energy-aware planning and control methods based on existing platforms have attracted increasing attention.
    
    \begin{figure}[htbp]
        \centering
        \includegraphics[scale=0.25]{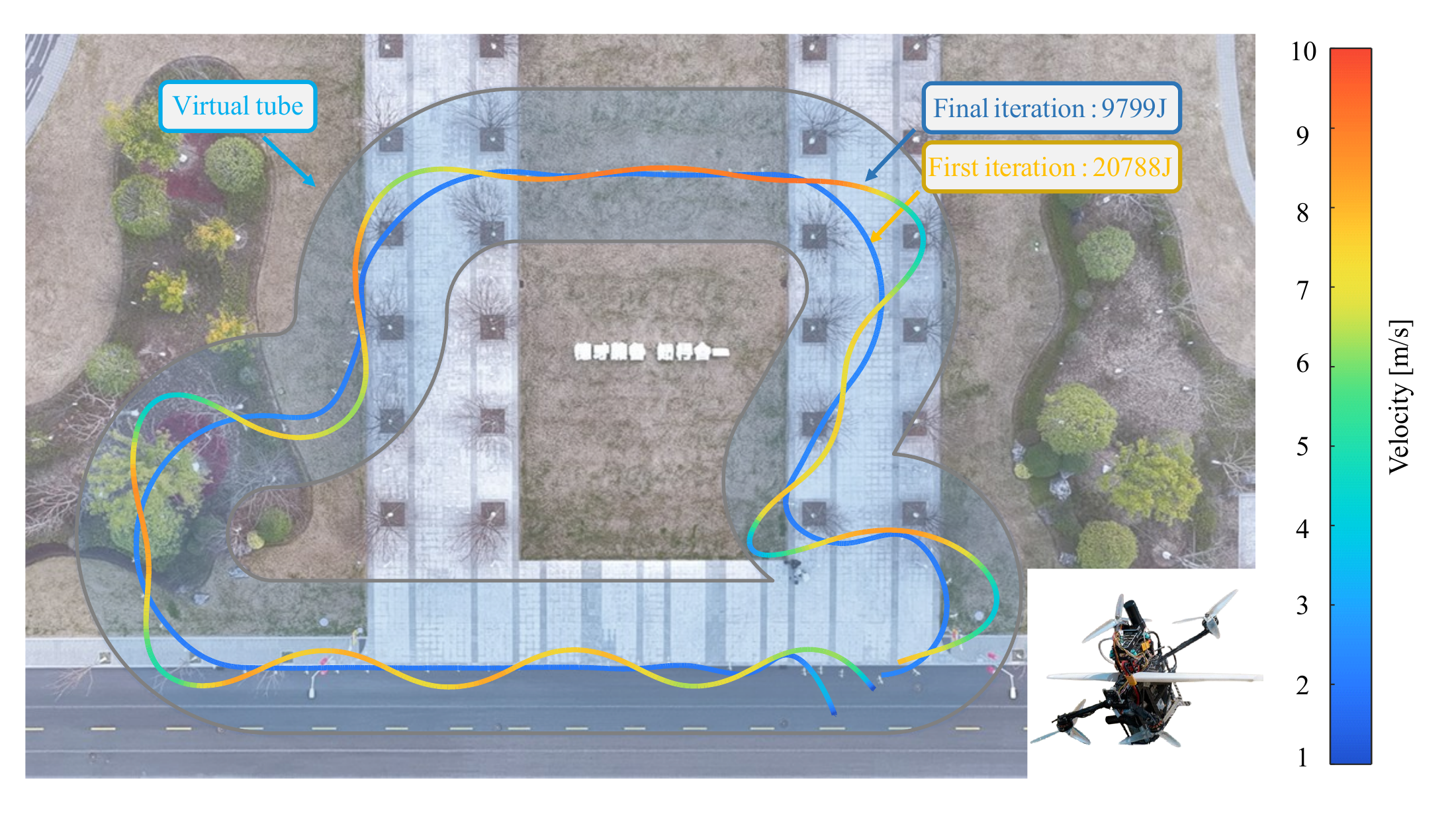}
        \caption{Outdoor experiments under lifting-wing multicopter platform. }
        \label{muticopter}
    \end{figure}
    
    Most existing energy-optimal UAV planning methods rely on explicit energy or dynamic models. 
    One representative class formulates an energy-related objective function and solves an optimal control problem using mathematical optimization techniques \cite{Karydis2017EnergeticsIR,7959079,6868215,8793549,zhai10839300}. 
    Another class constructs empirical energy models for basic flight modes and decomposes complex trajectories into motion primitives \cite{7513397}. 
    Although these methods can incorporate realistic energy characteristics, they are often restricted to predefined path geometries and remain sensitive to modeling inaccuracies. 
    Moreover, there is currently no unified UAV energy modeling standard, and different models may yield substantially different energy estimates for identical missions \cite{zhang2021energy}.
    
    In aviation, the maximum range cruise speed is defined as the velocity that minimizes energy consumption per unit distance traveled \cite{anderson1999aircraft}. While this concept provides a cornerstone for long-endurance missions under steady straight-line flight assumptions, its applicability is limited in robotic UAV applications. Modern UAVs often operate along non-linear trajectories with frequent maneuvers. the energy-optimal speed profile in such cases depends on a complex interplay between velocity, path curvature, and the vehicle's inherent dynamics.

    These challenges are further compounded by the complexity of UAV systems. Accurate modeling of UAV dynamics and energy consumption is demanding and often impractical, limiting model-based optimization \cite{zhang2021energy}. Model-free methods offer a practical compromise, where conventional temporal-domain iterative learning control suits repetitive tasks well \cite{1636313,Wang2009SurveyOI}. However, these time-indexed frameworks focus on tracking accuracy and may suffer from severe time-alignment mismatches when velocity varies dynamically for energy optimization, as in trial-varying path-following tasks \cite{11164953,9761902}.
    
    To overcome this, this paper proposes a model-free energy-optimal spatial iterative learning (IL) method based on power--velocity gradient estimation. By indexing the learning law over the invariant spatial domain rather than time, this approach completely decouples the optimization process from velocity-induced temporal fluctuations \cite{9686620}. A virtual tube \cite{quan2023distributed} enforces spatial constraints, and the spatial IL strategy iteratively adjusts the tangential velocity command along curved trajectories. Without explicit models, the proposed method implicitly and robustly guides the UAV toward an energy-efficient cruising regime.
    \color{black}

    The main contributions of this work are summarized as follows:
    \begin{itemize}
        \item A novel model-free UAV energy optimization method based on power--velocity gradient estimation is proposed, eliminating the dependence on accurate dynamic and energy models.
        \item The proposed algorithm achieves a computational complexity of $O(k^{*}n)$, where $k^{*}$ is the maximum number of iterations and $n$ is the number of waypoints, while exhibiting near-optimal energy performance.
        \item This work extends the application scope of iterative learning control by addressing energy-optimal trajectory tracking through controller-structure innovation.
    \end{itemize}

\section{PRELIMINARY AND PROBLEM FORMULATION\label{sec:Problem-Statement}}

This section presents a simplified UAV control model as the basis for subsequent analysis. A virtually bounded tube is then defined for tracking a static path, followed by a demonstration of the existence of an energy-optimal velocity. Finally, the problem is formulated as an energy-optimal control problem that minimizes energy consumption within the tube.

\subsection{UAV Model}

The full dynamics of a UAV involve aerodynamic effects, attitude coupling, and rotational inertia. To enable tractable analysis and control design, the UAV is modeled as a planar point mass, and only its translational motion and energy-related behavior are considered. The resulting model in $\mathbb{R}^2$ is given by
\begin{equation}
  \left\{ 
  \begin{aligned}
    \dot{\mathbf p}(t) &= \mathbf v(t),\\
    \dot{\mathbf v}(t) &= \mathbf a(t),
  \end{aligned}
  \right.
  \label{eq:point-mass-model}
\end{equation}
where $\mathbf p(t), \mathbf v(t), \mathbf a(t) \in \mathbb{R}^2$ denote the position, velocity, and acceleration, respectively. Instead of explicitly modeling attitude dynamics, the closed-loop translational behavior induced by the inner-loop controller is approximated by a first-order velocity tracking model with respect to a commanded velocity $\mathbf v_{\mathrm c}(t)$ \cite{11164953}:
\begin{equation}    
  \left\{ 
  \begin{aligned}
    \dot{\mathbf p}(t) &= \mathbf v(t),\\
    \dot{\mathbf v}(t) &= -\tau \big( \mathbf v(t) - \mathbf v_{\mathrm c}(t) \big),
  \end{aligned}
  \right.
  \label{eq:point-mass-dynamic}
\end{equation}
where $\tau>0$ is a parameter determined by the UAV's dynamics and inner-loop controller, identified empirically. The commanded velocity is constrained by actuator limits and is saturated as
\begin{equation}
  \mathbf v_{\mathrm c}
  = \mathrm{sat}(\mathbf v_{\mathrm c}', v_{\max})
  \triangleq
  \begin{cases}
    \mathbf v_{\mathrm c}', & \|\mathbf v_{\mathrm c}'\| \le v_{\max},\\[6pt]
    v_{\max} \dfrac{\mathbf v_{\mathrm c}'}{\|\mathbf v_{\mathrm c}'\|}, & \|\mathbf v_{\mathrm c}'\| > v_{\max},
  \end{cases}
  \label{eq:sat-kappa}
\end{equation}
where $\mathbf v_{\mathrm c}'$ denotes the unconstrained command and $v_{\max}$ is the maximum allowable speed.

To analyze motion along a given reference path, the model is reformulated in the spatial domain using the traveled distance $l$ as the independent variable. Let $\mathrm D_l=\frac{\mathrm{d}}{\mathrm{d}l}$ denote the spatial derivative\cite{4668486}, \cite{9525166}. By the chain rule,
\begin{equation}
  \mathrm D_t = v(l)\,\mathrm D_l,
  \label{eq:time-to-space-deriv}
\end{equation}
where $v(l)=\frac{\mathrm{d}l}{\mathrm{d}t}>0$ is the traversal speed. Applying \eqref{eq:time-to-space-deriv} to \eqref{eq:point-mass-dynamic} yields the spatial-domain dynamics
\begin{equation}    
  \left\{ 
  \begin{aligned}
    \mathrm D_l \mathbf p(l) &= \frac{1}{v(l)}\, \mathbf v(l), \\
    \mathrm D_l \mathbf v(l) &= -\frac{\tau}{v(l)} \big( \mathbf v(l) - \mathbf v_{\mathrm c}(l) \big),
  \end{aligned}
  \right.
  \label{eq:spatial-model}
\end{equation}
where $\mathbf v_{\mathrm c}(l)$ inherits the saturation constraint in \eqref{eq:sat-kappa}. Since $v(l)>0$, the mapping $l(t) = \int_0^t v(s)\mathrm{d}s$ is strictly increasing, ensuring the existence of a unique inverse $t=f^{-1}(l)$.

Finally, the velocity tracking error is defined as
\begin{equation}
  \mathbf v(l) = \mathbf v_{\mathrm c}(l) + \delta \mathbf v(l),
  \label{eq:velocity-error}
\end{equation}
which will be used for subsequent analysis in the spatial domain.

\subsection{Virtual Tube for Navigation}
{
    The concept of a virtual tube provides a safe and obstacle-free operational region for robots \cite{9981842}. 
    By constraining the robot within such a region, motion planning and control are simplified while maintaining safety.
    This work builds on our previous studies on virtual-tube-based trajectory planning and pass-through control \cite{9981842, GAO2022107800, quan2023distributed}.

    A virtual tube is a regular curved tube designed on a two-dimensional plane, as shown in Fig.~\ref{tube}. A specific position $\mathbf{p}$ within the virtual tube can be parametrically expressed as:
    \begin{equation}
      \mathcal{T}(l,\theta,\rho)
      = \boldsymbol{\gamma}(l)
      + \rho r_t(l)\mathbf{n}(l)\cos\theta,
      \label{eq:vitural_tube_with_generator}
    \end{equation}
    where $\theta \in \{0, \pi\}$, $l \in [0, L]$, and $\rho \in [0, 1]$. 
    The curve $\boldsymbol{\gamma}(l)$ is the generating curve of the virtual tube. The vector $\mathbf{n}(l)$ represents the unit normal vector of the generating curve, and $r(l)$ is a continuous real-valued function with respect to $l$, which represents the local width (or radius) of the virtual tube. 
    The variable $l$ represents the arc length of the generating curve from the starting point $\boldsymbol{\gamma}(0)$, and $\boldsymbol{\gamma}(l)$ is the position with the arc length $l$ along the generating curve. The constant $L > 0$ represents the total length of the generating curve. The boundary of $\mathcal{T}$ is denoted by $\partial \mathcal{T} $. The generation and construction of virtual tubes have been investigated in existing studies on UAV motion planning \cite{9981842}. In this work, the virtual tube is adopted as a predefined safe boundary for motion planning.
\color{black}

    For any $\mathbf{p}\in\mathcal{T}$, its projection onto the reference path is defined by
    \begin{equation}
      \mathbf{m}(\mathbf{p})
      = \arg\min_{s \in [0,L]}
      \|\gamma(s)-\mathbf{p}\|^2.
      \label{eq:mapping}
    \end{equation}
    The corresponding tangential direction
    $\mathbf{t}_{\mathrm c}=\mathrm{D}_l\,\mathbf{m}(\mathbf{p})$
    defines the motion direction inside the virtual tube, as illustrated in Fig.~\ref{IL decomposition}.
    \begin{figure}[htbp]
        \centering{}\includegraphics[scale=0.55]{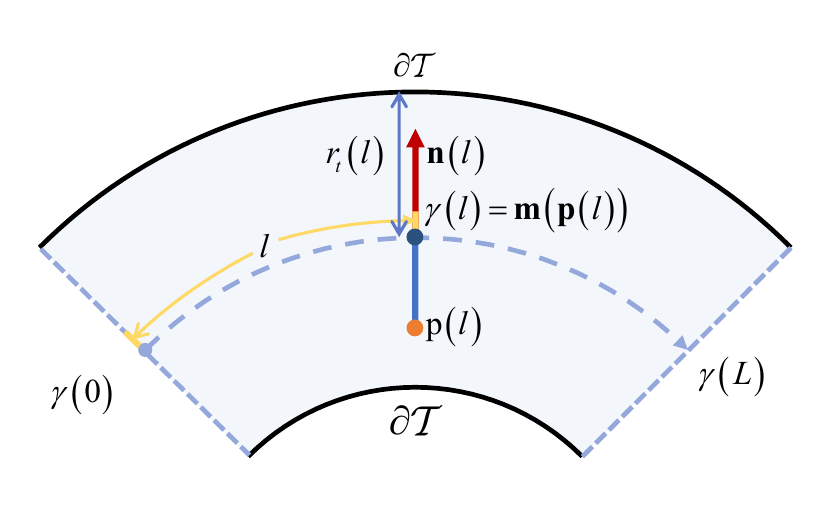}
        \caption{An example of virtual tubes on a two-dimensional plane. }
        \label{tube}
    \end{figure}
}

\subsection{Existence of the Optimal Energy-Consuming Velocity}
    In the time domain, the energy consumption rate of a UAV equals its instantaneous electrical power consumption, namely,
    \begin{equation}
      \mathrm{D}_t E(t) = P(t),
      \label{eq:energy_in_time_domain}
    \end{equation}
    where $E(t)$ denotes the accumulated energy over $[0,t]$ and $P(t)>0$ denotes the instantaneous electrical power consumption. \color{black}The total energy consumption is given by
    \begin{equation}
      E = \int_{0}^{T} P(t)\, \mathrm{d}t,
      \label{eq:energy consumption}
    \end{equation}
    where $T=\int_{0}^{L} \frac{1}{v(l)}\, \mathrm{d}l$ is the total traversal time along a path of fixed length $L$.
    
    Consider a UAV traveling along a fixed reference path $\boldsymbol{\gamma}$ of length $L$. Using the time--space transformation in \eqref{eq:time-to-space-deriv}, the total energy consumption can be expressed in the spatial domain as
    \begin{equation}
        E = \int_{0}^{L} \frac{P(v(l))}{v(l)} \, \mathrm{d}l = \int_{0}^{L} P(l,v) \, \mathrm{d}l,
        \label{eq:energy_in_spatial_domain}
    \end{equation}
    where $v(l)={\mathrm{d}l}/{\mathrm{d}t}>0$ denotes the translational speed along the path, and $P(l,v) \triangleq {P(v(l))}/{v(l)}$ explicitly characterizes the spatial energy consumption rate depending on the local speed.
    
    To analyze the existence of an energy-optimal velocity, we partition the path into sufficiently small spatial segments.
    Within a small segment $[l_i, l_i+\Delta l]$, the velocity variation is negligible and can be approximated as a constant $v_i$.
    The energy consumed in this segment can then be approximated as
    \begin{equation}
        E_i(v_i) \approx \Delta l \cdot P(l_i, v_i).
    \end{equation}
    Thus, minimizing the local energy consumption is equivalent to minimizing $P(l_i, v_i)$ over the admissible speed interval $[v_{\min}, v_{\max}]$.
    
    In practical UAV propulsion systems, the instantaneous power $P(v)$ primarily comprises a constant baseline power and an aerodynamic drag term that increases smoothly with $v$. This ensures $P(v)$ is continuous and strictly positive on any compact interval $[v_{\min}, v_{\max}]$ with $0<v_{\min}<v_{\max}<\infty$.
    Consequently, both $P(v)$ and $1/v$ are continuous on this interval, rendering the spatial rate $P(l,v)$ continuous as well. By the Weierstrass extreme value theorem \cite{martinez2014weierstrass}, $P(l,v)$ attains a global minimum within $[v_{\min}, v_{\max}]$.
    
    \subsection{Problem Formulation}
    
    The objective of this study is to enable a UAV to pass through a virtual tube safely while minimizing total energy consumption.
    Unlike conventional trajectory-tracking problems, the focus here is on energy-optimal motion generation under safety and dynamic constraints.
    
    Furthermore, because aerodynamic drag typically manifests as a polynomial function of velocity (often involving cubic terms), the power model is inherently smooth without non-differentiable points within the physical flight envelope. Based on the analysis in \textit{Subsection II-C}, we make the following assumption:
    
    \begin{assumption}\label{assum:smooth_power}
        The spatial energy consumption rate $P(l,v)$ is twice continuously differentiable with respect to $v$, i.e., $P(l,v)\in \mathcal{C}^2$ in $v$.
    \end{assumption}
    
    The energy-optimal navigation problem is then formulated as
    \begin{equation}
        \begin{aligned}
            \min_{\mathbf{v}_{\mathrm{c}}(l)\in\mathbb{R}^2} \quad
            & E \\
            \text{s.t.} \quad
            & \eqref{eq:spatial-model}, \\
            & \|\mathbf{v}_{\mathrm{c}}(l)\| \le v_{\max}, \\
            & \mathbf{p}(l) \in \mathcal{T}.
        \end{aligned}
        \label{eq:Problem_Formulation}
    \end{equation}
    
    The goal is to solve \eqref{eq:Problem_Formulation} using a spatial IL framework, which fundamentally differs from classical IL approaches that primarily focus on trajectory tracking.

    \section{ENERGY-OPTIMAL SPATIAL IL\label{sec:Erengy optimal}}
    This section first presents the design rationale and specific formulation of the control law. Based on this foundation, the energy-optimal spatial IL method tailored for the virtual tube is developed, accompanied by its convergence proof. Finally, the algorithmic complexity of the spatial IL is analyzed.

\subsection{Control Law Design}
{
	The energy-optimal spatial IL iteratively regulates velocity to achieve an optimal trade-off between energy consumption reduction and path tracking accuracy, thereby deriving a velocity allocation strategy based on spatial coordinates.
	
	As shown in Fig.\ref{IL decomposition}, based on the initial path $\gamma$  and the virtual tube obtained from the path planner, the corresponding IL structure is designed as follows:
	\begin{equation}
		\begin{aligned}
			\mathbf{v}'_{\mathrm{c},k}(l) = \mathbf{v}_{\mathrm{h},k}(l)+\mathbf{v}_{\mathrm{p},k}(l),
		\end{aligned}
		\label{eq:IL structure}
	\end{equation}
	where $\mathbf{v}'_{\mathrm{c},k}(l)$ denotes the original control command in the $k$-th iteration, with $0 < k \leq k^*$ and $k\in \mathcal{N}^+$ ($k^*$ is the maximum iteration number). In this command, $\mathbf{v}_{\mathrm{h},k}(l)$ is used for the tangential component and $\mathbf{v}_{\mathrm{p},k}(l)$ is used for path convergence, namely helping the UAV track the path $\gamma$.
}
\subsection{Convergence Control}
{
    
Since the UAV has a closest point on the initial path, the error vector of the UAV from the initial path is defined as
\begin{equation}
	\begin{aligned}
		\mathbf{e}_{\mathrm{p},k}(l) = \mathbf{m}(\mathbf{p}_k(l))-\mathbf{p}_k(l).
	\end{aligned}
	\label{eq:error}
\end{equation}
\color{black}
    In addition, the size of the error is defined as
    	\begin{equation}
		\begin{aligned}
			e_{\mathrm{p},k}(l) =  \| \mathbf{e}_{\mathrm{p},k}(l) \|.
		\end{aligned}
		\label{eq:error_size}
	\end{equation}

    \begin{figure}[htbp]
		\centering{}\includegraphics[scale=0.8]{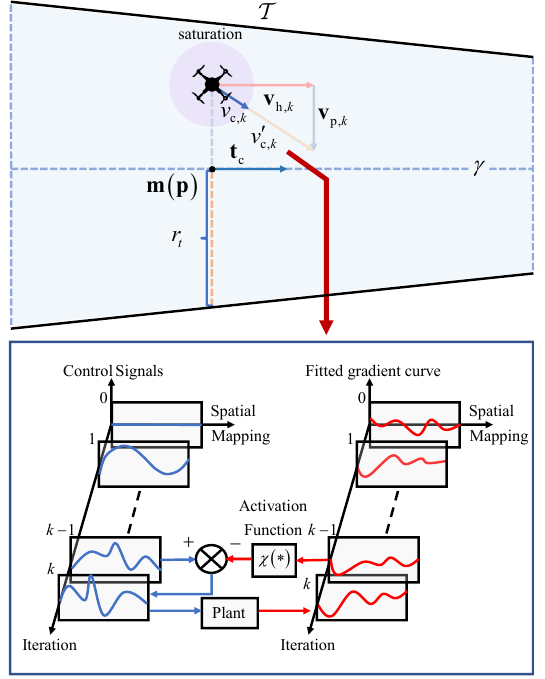}
		\caption{IL decomposition.}
		\label{IL decomposition}
	\end{figure}

	The designed convergence control term is error-based to ensure that the robot follows the tracking path. To achieve smooth and swift lateral convergence, a constant proportional-derivative control law is deployed in this paper as
    \begin{equation}
    	\begin{aligned}
    		\mathbf{v}_{\mathrm{p},k}(l) = \left( k_\mathrm{p} \mathbf{e}_{\mathrm{p},k}(l) + k_\mathrm{d} \mathrm D_le_{\mathrm{p},k}(l)\frac{{\mathbf{e}}_{\mathrm{p},k}(l)}{e_{\mathrm{p},k}(l)} \right) v^*_{k}(l),
    	\end{aligned}
    	\label{eq:convergence control}
    \end{equation}
    where $ v^*_{k}(l) $ denotes the velocity adjustment term for ILC, which will be elaborated later; $e_{\mathrm{p},k}(l)$ is the path tracking error, and $\mathrm D_le_{\mathrm{p},k}(l)$ represents its spatial derivative. The control parameters $k_\mathrm{p} > 0$ and $k_\mathrm{d} > 0$ are preset constant tracking gains.
    
	\begin{remark}
    To ensure the UAV remains within the virtual tube, a control input based on the Barrier Lyapunov Function (BLF) can also be designed as:
    \begin{align}
    \mathbf{v}_{\mathrm{p},k}(l) = -k_1(l)\frac{\mathbf{e}_{\mathrm{p},k}(l)}{r_t^2(l) -e^2_{\mathrm{p},k}(l)},
    \label{eq:BLF}
    \end{align}
    where $ k_1(l) > 0$ determines the corrective intensity of the BLF-based feedback term. 
    Both \eqref{eq:convergence control} and \eqref{eq:BLF} generate corrective velocities for reducing the path tracking error, with different gain structures. The former uses preset constant gains, while the latter uses a state-dependent gain near the tube boundary. In this paper, \eqref{eq:convergence control} is used in the simulations and experiments.
    \end{remark}
    \color{black}
}

\subsection{Model-free Energy Learning}
{
	To adjust the tangential traversal pace, the pace controller is designed as
	\begin{equation}
		\begin{aligned}
			\mathbf{v}_{\mathrm{h},k}(l) = v^*_{k}(l)\mathbf{t_c},
		\end{aligned}
		\label{eq:pace controller}
	\end{equation}
	where $v^*_{k}(l)$ is consistent with that in \eqref{eq:convergence control},
	and $\mathbf{t_c}$ indicates the unit tangent direction at the location $l$.

	As demonstrated in $ \textit {Section II}$, the optimization energy objective for the planned path is transformed from the time domain to the spatial domain, 
	converting the original variable-upper-limit integral problem into a fixed-integral formulation. Consequently, the extremum condition of the objective function can be derived based on the core Euler-Lagrange equation of the variational method \cite{berberian1999introduction}, namely,

	\begin{equation}
		\begin{aligned}
			\frac{\mathrm{d}}{\mathrm{d}l}\left(\frac{\partial (P(l,v))}{\partial (\mathrm{D}_l v)}\right) - \frac{\partial (P(l,v))}{\partial v} = 0.
		\end{aligned}
		\label{eq:Euler-Lagrange Equation-1}
	\end{equation}
    
    For cruising-dominated trajectories with moderate velocity variations, acceleration-induced energy is treated as secondary, and the spatial energy rate is approximated by $P(l,v)$ rather than $P(l,v,\mathrm{D}_l v)$. Thus,
    \begin{equation}
    \begin{aligned}
    \frac{\partial (P(l,v))}{\partial (\mathrm{D}_l v)} = 0,
    \end{aligned}
    \label{eq:Euler-Lagrange Equation-special}
    \end{equation}
    which yields the pointwise optimality condition
    \begin{equation}
    \begin{aligned}
    \frac{\partial (P(l,v))}{\partial v} = 0.
    \end{aligned}
    \label{eq:Euler-Lagrange Equation-2}
    \end{equation}
    For aggressive maneuvers, a higher-order model including $\mathrm{D}_l v$ would be required.
    \color{black}

	During the flight, variations in velocity and energy are observed in real time, enabling the estimation of the instantaneous power derivative with respect to velocity.
	Based on the above ideas, the learning law of $v^*_{k}$ was designed in the form of spatial IL:
	\begin{equation}
		\begin{aligned}
			{v}^*_{k+1}(l) = {v}^*_{k}\left(l\right) - \chi \left({\partial P(l,v)}/{\partial v}\big|_{v=v_k}\right),
		\end{aligned}
		\label{eq:learning law}
	\end{equation}
	especially used for adjusting the contribution of ${v}_{k}(l)$ in \eqref{eq:IL structure}, $ k $ indicates the iteration
	number and $\chi$ is a function which satisfies
	\begin{equation}
		\begin{aligned}
			\chi(x)=k_{\chi}(x)x,0<\alpha \leq k_{\chi}(x) \leq \beta .
		\end{aligned}
		\label{eq:nonlinear activation function}
	\end{equation}
    
	Following the IL principle, the framework estimates the energy-consumption-rate gradient from experimental data to iteratively optimize the velocity command within the virtual tube. As illustrated in Fig.~\ref{func}, if $\left.\partial P(l,v)/\partial v\right|*{v=v_k} < 0$, the velocity command is increased to reduce energy consumption; otherwise, it is decreased. The nonlinear mapping limits the correction amount of $v^*_{k}(l)$ between two adjacent iterations, preventing abrupt speed variations and improving convergence.

    \begin{remark}
    A bounded activation function $\chi(x)$ can be constructed using a scaled sigmoid function, e.g., $\chi(x)=a\left({1}/{(1+e^{-bx})}-{1}/{2}\right).$
    More generally, any bounded Lipschitz continuous function satisfies the design requirements.
    For analytical convenience, $\chi(x)$ is assumed to be linear with constant slope $k_\chi$ in the subsequent analysis.
    \end{remark}
    
    \begin{remark}
    The proposed method updates the control command using directly measured data, including instantaneous power, velocity, and position information.
    It does not require explicit system dynamics, model parameters (e.g., $\tau$ in \eqref{eq:spatial-model}), or the energy model.
    The parameter $\tau$ is used only for analysis and simulation validation, rather than for the practical learning update.
    Model uncertainties are handled implicitly through the iterative update process.
    Therefore, the method is data-driven and model-free in implementation.
    \end{remark}

    \begin{figure}[htbp]
        \centering
        \includegraphics[scale=0.35]{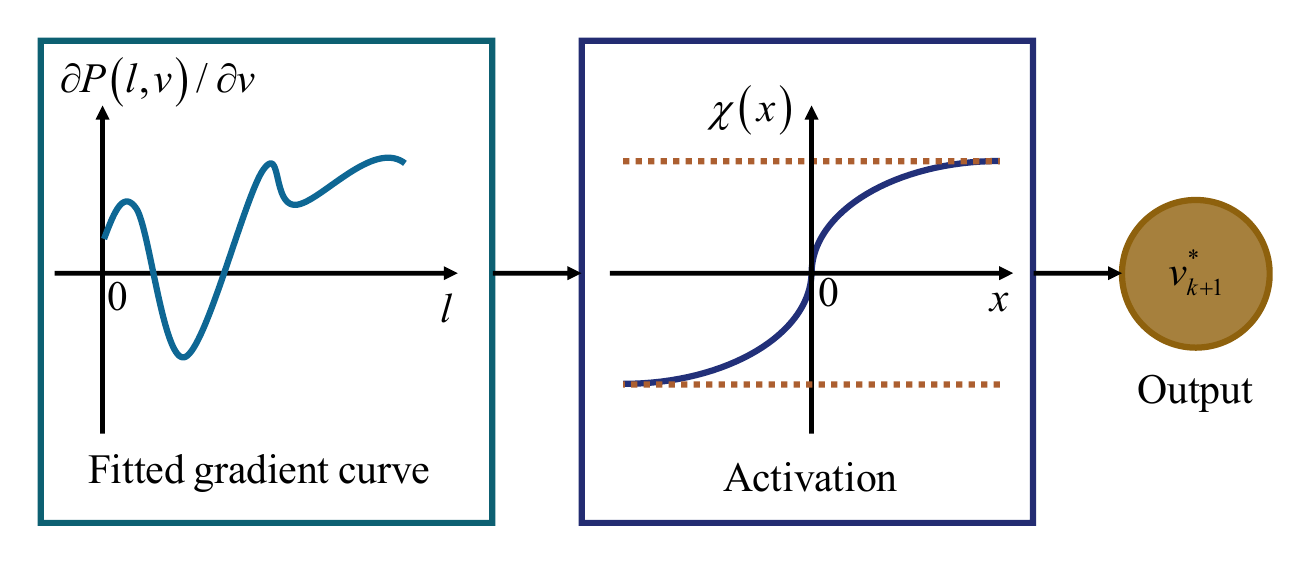}
        \caption{The fitted gradient curve of the energy consumption rate, after being processed by the activation function $\chi$, is applied to update the velocity command $v^*_{k+1}$.}
        \label{func}
    \end{figure}

    \begin{theorem}
        Suppose the UAV model satisfies \eqref{eq:spatial-model}, the controller is designed as \eqref{eq:IL structure}, \eqref{eq:convergence control}, and \eqref{eq:pace controller}, and $M$ denotes an upper bound of $\partial^2 P(l,v)/\partial v^2$.
        With the IL law \eqref{eq:learning law}, if $\delta \mathbf{v} \equiv \mathbf{0}$ and $k_{\chi}$  satisfies $k_{\chi}< 2/M$, then the term $v_k^*(l)$ is uniformly ultimately bounded as $k \to \infty$ and there exists $ \lim_{k \to \infty} E = E^*$.
    \label{convergetheorem}
    \end{theorem}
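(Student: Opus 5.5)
The plan is to treat the learning law \eqref{eq:learning law} pointwise in $l$ as a gradient descent on the scalar function $v\mapsto P(l,v)$ and to use a descent-lemma argument. Under the hypothesis $\delta\mathbf v\equiv\mathbf 0$, \eqref{eq:velocity-error} gives $\mathbf v_k(l)=\mathbf v_{\mathrm c,k}(l)$. If we also restrict to the regime where the saturation in \eqref{eq:sat-kappa} is inactive, then the traversal speed equals the commanded tangential speed $v_k^*(l)$. One caveat is the contribution of $\mathbf v_{\mathrm p,k}$, which is orthogonal to $\mathbf t_{\mathrm c}$ since $\mathbf e_{\mathrm p,k}$ is normal to $\boldsymbol\gamma$ at the projection point. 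I would either argue that it vanishes once the UAV is on the path, or absorb it by identifying $v_k(l)$ with $v_k^*(l)$ along the reference curve. The update then reads $v^*_{k+1}(l)=v^*_k(l)-k_\chi\,\partial_v P(l,v^*_k(l))$, using the linear form of $\chi$ stated in the remark following \eqref{eq:nonlinear activation function}.

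Fix $l$ and write $f(v)=P(l,v)$. Assumption~\ref{assum:smooth_power} gives $f\in\mathcal C^2$, and $|f''|\le M$, so $f'$ is $M$-Lipschitz. A second-order Taylor expansion with remainder yields
\begin{equation*}
f(v_{k+1})\le f(v_k)-k_\chi\Big(1-\tfrac{M k_\chi}{2}\Big)\big(f'(v_k)\big)^2 .
\end{equation*}
Because $k_\chi<2/M$, the coefficient $c=k_\chi(1-Mk_\chi/2)$ is positive. Hence $f(v_k^*(l))$ is nonincreasing in $k$. It is also bounded below by $\min_{[v_{\min},v_{\max}]}f>0$, whose existence was shown by the Weierstrass argument in Subsection II-C. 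So $f(v_k^*(l))$ converges for every $l$. Summing the inequality over $k$ gives $\sum_k (f'(v_k))^2\le (f(v_1)-\inf f)/c<\infty$, and therefore $\partial_vP(l,v_k^*)\to0$. This is consistent with the stationarity condition \eqref{eq:Euler-Lagrange Equation-2}.

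For uniform ultimate boundedness of $v_k^*(l)$, the idea is to confine the iterates to a sublevel set. The iterates stay in $\{v: f(v)\le f(v_1^*(l))\}$. Since $P(v)/v\to\infty$ as $v\to0^+$ (the baseline power is positive) and the cubic drag term makes $P(v)/v$ grow as $v\to\infty$, this sublevel set is a compact subinterval of $(0,\infty)$. Its endpoints can be bounded uniformly in $l$ if the initial command $v_1^*$ is bounded and $P$ has uniform coercivity in $l$; the admissible interval $[v_{\min},v_{\max}]$ together with the saturation also supplies an a priori bound. Finally, the total energy is $E_k=\int_0^L f(v_k^*(l))\,\mathrm dl$. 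The integrand is monotone nonincreasing in $k$ and dominated by the integrable function $f(v_1^*(l))$, so the monotone (or dominated) convergence theorem gives $E_k\downarrow E^*$.

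The main obstacle I expect is the first step: rigorously identifying the measured speed $v_k$ at which the gradient is evaluated with the command $v_k^*$. The lateral term \eqref{eq:convergence control} scales with $v_k^*$, and saturation could break the exact gradient-descent form. I would first try to show that with $\delta\mathbf v\equiv\mathbf 0$ and zero initial lateral error, $\mathbf e_{\mathrm p,k}\equiv\mathbf 0$ is invariant, so that $\mathbf v_{\mathrm p,k}\equiv\mathbf 0$. If that invariance fails, I would instead treat the lateral contribution as a bounded perturbation that yields only ultimate boundedness. This fallback is presumably why the theorem asserts uniform ultimate boundedness rather than exact convergence of $v_k^*$.
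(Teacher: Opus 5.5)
Your proposal is correct and follows essentially the same route as the paper's Appendix~A. The paper also uses a pointwise second-order Taylor (descent-lemma) bound $P(l,v_{k+1})\le P(l,v_k)+\bigl(Mk_\chi^2/2-k_\chi\bigr)\bigl(\partial_v P(l,v_k)\bigr)^2$, notes that the coefficient is strictly negative when $k_\chi<2/M$, concludes that the monotone, bounded-below sequence converges, and passes to $E_k\to E^*$.

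Your extra pieces are steps the paper passes over silently:
\begin{itemize}
\item identifying $v_k$ with $v_k^*$ via $\delta\mathbf v\equiv\mathbf 0$ and the orthogonality of $\mathbf v_{\mathrm p,k}$ to $\mathbf t_{\mathrm c}$;
\item summability of the squared gradients, which gives $\partial_v P(l,v_k^*)\to 0$;
\item the sublevel-set argument for uniform ultimate boundedness;
\item monotone convergence for the integral over $l$.
\end{itemize}
Your decision not to claim $v_k\to v^*$ is sound. The paper asserts that limit, but it does not follow from monotone convergence of $P(l,v_k)$ alone.
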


	\begin{proof}
	\setlength{\parindent}{0pt} See \textit{Appendix A}.
	\end{proof}
	\begin{corollary}
    	Given the UAV model, controller, and learning law in \textit{Theorem 1}, if velocity tracking error satisfies $\max_{l \in [0,L]} \mu=(\delta v_{k+1}(l)-\delta v_k(l))/(v_{k+1}(l)-v_k(l)) < 1$ and $k_{\chi} < 2(1-\mu)/M$, then the term $v_k^*(l)$ is uniformly ultimately bounded as $k \to \infty$ and there exists $ \lim_{k \to \infty} E = E^*$.
	\end{corollary}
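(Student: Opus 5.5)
The plan is to reduce the Corollary to the scalar, pointwise descent argument behind \textit{Theorem 1}, treating the velocity tracking error as a perturbation that shrinks the effective step of the gradient iteration. Fix $l\in[0,L]$ and write $g(v)=\partial P(l,v)/\partial v$. Under \textit{Assumption 1}, $g$ is $\mathcal{C}^1$ in $v$ with $|\partial g/\partial v|\le M$. Following the linear-$\chi$ simplification in the Remark, the learning law \eqref{eq:learning law} reads $v^*_{k+1}=v^*_k-k_\chi g(v_k)$.

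\textbf{Step 1: express the actual velocity update.} By \eqref{eq:velocity-error}, the tangential speed is $v_k=v^*_k+\delta v_k$, since the convergence term \eqref{eq:convergence control} is normal to $\mathbf t_c$ and scaled by $v^*_k$. Hence
\[
v_{k+1}-v_k=(v^*_{k+1}-v^*_k)+(\delta v_{k+1}-\delta v_k).
\]
Using the definition of $\mu$, this gives $(1-\mu)(v_{k+1}-v_k)=-k_\chi g(v_k)$. With $\mu<1$,
\[
v_{k+1}=v_k-\tfrac{k_\chi}{1-\mu}\,g(v_k).
\]
This is exactly the $\delta\mathbf v\equiv\mathbf 0$ iteration of \textit{Theorem 1}, run with the effective gain $\tilde k_\chi=k_\chi/(1-\mu)$. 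The assumption $k_\chi<2(1-\mu)/M$ is equivalent to $\tilde k_\chi<2/M$. Because the statement uses the maximum of $\mu$ over $l$, I will take $\mu$ to be a uniform bound over $l$ and $k$; I would make this explicit in the write-up.

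\textbf{Step 2: obtain descent.} Apply the descent lemma from the $\mathcal C^2$ bound:
\[
P(l,v_{k+1})\le P(l,v_k)+g(v_k)(v_{k+1}-v_k)+\tfrac M2 (v_{k+1}-v_k)^2
= P(l,v_k)-\tilde k_\chi\bigl(1-\tfrac{M\tilde k_\chi}{2}\bigr)g(v_k)^2 .
\]
So $P(l,v_k)$ is nonincreasing in $k$. It is bounded below, since $P>0$ on the admissible interval by Section II-C, so it converges. Summing the inequality gives $\sum_k g(v_k)^2<\infty$, hence $g(v_k)\to0$.

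\textbf{Step 3: boundedness of $v_k^*$ and convergence of $E$.} The increments satisfy
\[
|v^*_{k+1}-v^*_k|=k_\chi|g(v_k)|\to0.
\]
Combined with the saturation $\|\mathbf v_c\|\le v_{\max}$ and the positivity of $v$, this keeps $v^*_k$ within a bounded set, which gives uniform ultimate boundedness. For uniformity in $l$, note that the constants $M$, $\mu$, $\alpha$, $\beta$ do not depend on $l$. Finally, $E_k=\int_0^L P(l,v_k(l))\,\mathrm dl$ is monotone nonincreasing and bounded below by the integral of the pointwise Weierstrass minima, so monotone convergence gives $\lim_k E_k=E^*$.

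\textbf{Main obstacle.} The hard step is Step 1: justifying that the ratio $\mu$ really converts the disturbance into a rescaled gain. If $v_{k+1}=v_k$ for some $k$, the ratio $\mu$ is undefined, and I would handle that case separately as a fixed point. The sign of $1-\mu$ must also be positive, which holds by $\mu<1$. If $\mu$ can be negative, the effective gain shrinks and the step condition $\tilde k_\chi<2/M$ still holds, so the condition $k_\chi<2(1-\mu)/M$ suffices in every case. Everything after the reduction follows the \textit{Theorem 1} argument verbatim.
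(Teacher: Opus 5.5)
Your core reduction is the same as the paper's. Appendix~B also writes $v_k=v_k^*+\delta v_k$ and uses the ratio $\mu$ as an equality, as you do, to get $v_{k+1}-v_k=-\frac{k_\chi}{1-\mu}\,\partial P/\partial v|_{v=v_k}$. It then reruns the Appendix~A descent inequality with effective gain $k_\chi/(1-\mu)<2/M$. Your remarks on uniformity of $\mu$ in $l$ and $k$, and on the case $v_{k+1}=v_k$, address points the paper skips.

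The paper then just invokes the monotone-convergence argument of Appendix~A and gives no separate argument that $v_k^*$ stays bounded. Your Step~3 tries to supply one, and as written it does not work, for two reasons:
\begin{itemize}
\item Knowing $|v^*_{k+1}-v^*_k|\to 0$ does not bound $v^*_k$, since steps of size $1/k$ diverge.
\item The saturation acts on $\mathbf v_{\mathrm c}$ after $v^*_k$ has been formed, so by itself it says nothing about $v^*_k$. A saturated command with $\partial P/\partial v\neq 0$ lets $v^*_k$ drift linearly.
\end{itemize}

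What works is to compare the two update relations:
\[
v^*_{k+1}-v^*_k=-k_\chi g(v_k)=(1-\mu)(v_{k+1}-v_k).
\]
For constant $\mu$ this gives $v^*_k=v^*_1+(1-\mu)(v_k-v_1)$. Then $v^*_k$ is bounded because the actual speed is: $|v_k|\le v_{\max}$, since the first-order model is driven by a saturated command from an admissible start.

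If $\mu$ is only an upper bound on a ratio $\mu_k(l)$ that varies with $k$, more is needed. The telescoping becomes $v^*_k-v^*_1=\sum_j(1-\mu_j)(v_{j+1}-v_j)$, which requires both a lower bound $\mu_j\ge\underline{\mu}$ and finite total variation of $v_k$. The same lower bound is needed for your claim $\sum_k g(v_k)^2<\infty$, because otherwise the effective gain $k_\chi/(1-\mu_k)$ can tend to zero. So the condition does not suffice ``in every case'' beyond the per-step descent inequality itself.

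Finally, you should state explicitly that the iterates remain in $[v_{\min},v_{\max}]$; the paper omits this too. That interval is where the bound $M$ and the positive lower bound on $P$ are valid.
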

	
	\begin{proof}
	\setlength{\parindent}{0pt} See \textit{Appendix B}.
	\end{proof}

}

\subsection{Time Complexity Analysis}
{
    Computational efficiency is determined by the complexity of the learning update. From \eqref{eq:learning law}, the proposed spatial IL update is performed pointwise along the reference path, yielding a per-iteration complexity of $O(n)$, where $n$ is the number of discrete path points. With a predefined maximum iteration number $k^*$, the total complexity is $O(k^*n)$.
    
    By contrast, standard interior-point methods for constrained nonlinear optimization typically have a complexity of $O(n^3)$ with respect to the decision-variable dimension \cite{boyd2004convex}. As the path discretization and constraints increase, this cubic scaling may hinder onboard real-time implementation, even though sparsity exploitation and parallel computation can reduce the practical burden \cite{boyd2004convex,POTRA2000281,nocedal2006numerical}.
    
    \begin{remark}
    For quadrotors and lifting-wing multicopters, full model-based optimization involves high-dimensional dynamics and constraints. The proposed method avoids solving such nonlinear models by using measured power-gradient information to update only the velocity adjustment term $v^*$.
    \end{remark}
}

\section{ SIMULATIONS AND EXPERIMENTS\label{sec: SIMULATIONS AND EXPERIMENTS}}
{
	This section presents simulations and experiments to evaluate the performance and generalization capability of the proposed algorithm using directly observable energy and velocity data.
    In all simulations and experiments, the velocity adjustment term is updated once after each completed trajectory execution, and each update generates $v^*$ over $n$ discrete path points.
    The measured time-domain velocity and power data are projected onto the fixed spatial grid by nearest-path-point registration, so different lap durations remain aligned by the same spatial index $l$. The number of spatial path points is selected according to the total path length and flight speed to ensure appropriate spatial resolution for gradient estimation. The learning process is terminated when the predefined maximum number of trajectory executions is reached, and practical convergence is identified when the final energy values remain within a stable envelope.
    
    The results show that the proposed spatial IL method is model-free, computationally efficient, and applicable across different UAV platforms.
    
    \subsection{Numerical Comparison}
    {
    In this subsection, the simulation utilizes a simplified power model $P(t) = k_5 + k_6\|\mathbf{v}(t)\|^3$ \cite{zhang2021energy}, where $k_5 = 50\,\mathrm{W}$ and $k_6 = 0.05\,\mathrm{kg}\cdot\mathrm{m}^{-1}$. The nonlinear mapping $\chi(x) = 1/(1 + e^{-10x}) - 1/2$ is bounded within $[-0.5, 0.5]$, with $v_{\text{max}}=20.0\,\mathrm{m/s}$ and $v_1^*(l)=4.0\,\mathrm{m/s}$ . This nonzero admissible initial speed is used to prevent calculation errors in $P(l,v)$.\color{black}

    Crucially, both the parameter $\tau$ and the energy model remain unknown to the proposed IL method. To evaluate performance, the method is compared with two representative optimization baselines on the virtual-tube trajectory in Fig.~\ref{iteration}(a). The first baseline is a model-free Sequential Quadratic Programming (SQP) method implemented using fmincon, where the velocity commands at the discrete path points are directly optimized through black-box numerical simulations. This baseline provides a comparison with conventional model-free nonlinear optimization. The second baseline is a model-based optimization framework formulated in CasADi~\cite{andersson2019casadi} and solved by the Interior Point OPTimizer (IPOPT)~\cite{wachter2006ipopt}. This baseline exploits the explicit simplified model, sparsity, and algorithmic differentiation, and therefore serves as a more competitive modern numerical optimization benchmark.
    
     Results in Fig.~\ref{iteration} and Table~1 confirm that the IL method achieves energy efficiency comparable to both benchmarks, with rapid convergence of trajectory energy. The velocity adjustment term $v^*(l)$ in Fig.~\ref{iteration}(c) evolves progressively over the discrete path points and becomes stable as the iteration proceeds, demonstrating convergence of the learned velocity profile. Notably, the SQP baseline is formulated as a black-box nonlinear program that optimizes the velocity commands over 10,000 spatial path points under velocity and virtual-tube constraints; under this setting, the model-free SQP requires approximately 4.1 hours, and the model-based IPOPT takes 30 to 40 seconds. In contrast, the proposed spatial IL method converges in approximately 0.6 to 0.7~s. This efficiency gain empirically supports the $O(n)$ complexity of our framework, demonstrating that the proposed data-driven approach reduces the computational burden while preserving competitive energy performance.
    \color{black}
    	\begin{figure}[htbp]
    		\centering
    		\includegraphics[scale=0.38]{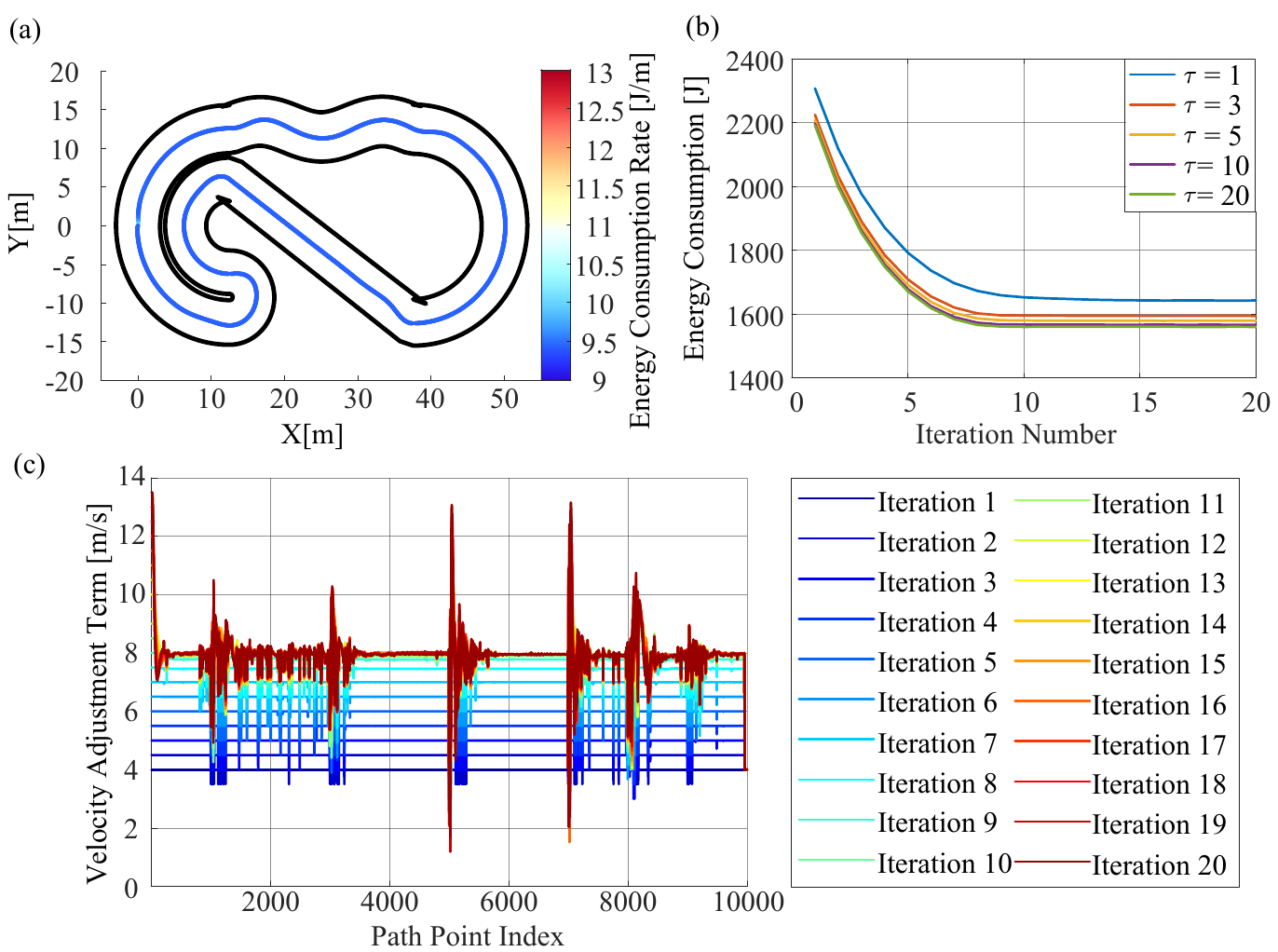}
    		\caption{ (a) Distribution of position and energy consumption rate for the IL method inside the tube using $\tau$ = 5. (b) The energy consumption varied with training iterations. (c) Evolution of the velocity adjustment term $ v_k^*(l)$ over iterations.}
    		\label{iteration}
    	\end{figure}

    \begin{table}[htbp]
    \centering
    \caption{Energy Consumption and Optimization Time of Different Algorithms}
    \label{tab2}	
    \begin{tabular}{m{0.01\textwidth}<{\centering}|m{0.05\textwidth}<{\centering}|m{0.05\textwidth}<{\centering}|m{0.06\textwidth}<{\centering}|m{0.05\textwidth}<{\centering}|m{0.035\textwidth}<{\centering}|m{0.06\textwidth}<{\centering}}
        \hline
        & \multicolumn{3}{c|}{Energy consumption (J)} & \multicolumn{3}{c}{Optimization Time (s)} \\ 
        \hline
        & \multicolumn{2}{c|}{Model-free} & \makebox[0.06\textwidth][c]{Model-based} & \multicolumn{2}{c|}{Model-free} & \makebox[0.045\textwidth][c]{Model-based} \\
        \hline
        $\tau $ & SQP &\textbf{SIL} & IPOPT  & SQP &\textbf{SIL} & IPOPT  \\
        \hline
        1 & 1650.59 & \textbf{1643.00} & 1556.50  & 9139.04 &\textbf{0.69} & 39.82 \\
        \hline
        3 & 1592.64 & \textbf{1594.44} & 1555.55  & 4812.23 &\textbf{0.65} & 35.41 \\
        \hline
        5 & 1574.58 & \textbf{1574.79} & 1555.36  & 5504.69 &\textbf{0.68} & 35.61  \\
        \hline
        10 & 1566.22 & \textbf{1566.41} & 1555.22  & 6597.91 &\textbf{0.61} & 40.16  \\ 
        \hline
        20 & 1558.46 & \textbf{1560.14} & 1555.15  & 5166.80 &\textbf{0.62} & 35.95  \\ 
        \hline
    \end{tabular}
\end{table}

    }
	\subsection{Simulation Flight Experiment}
    {
        
    	\begin{figure*}
    		\centering{}\includegraphics[scale=0.32]{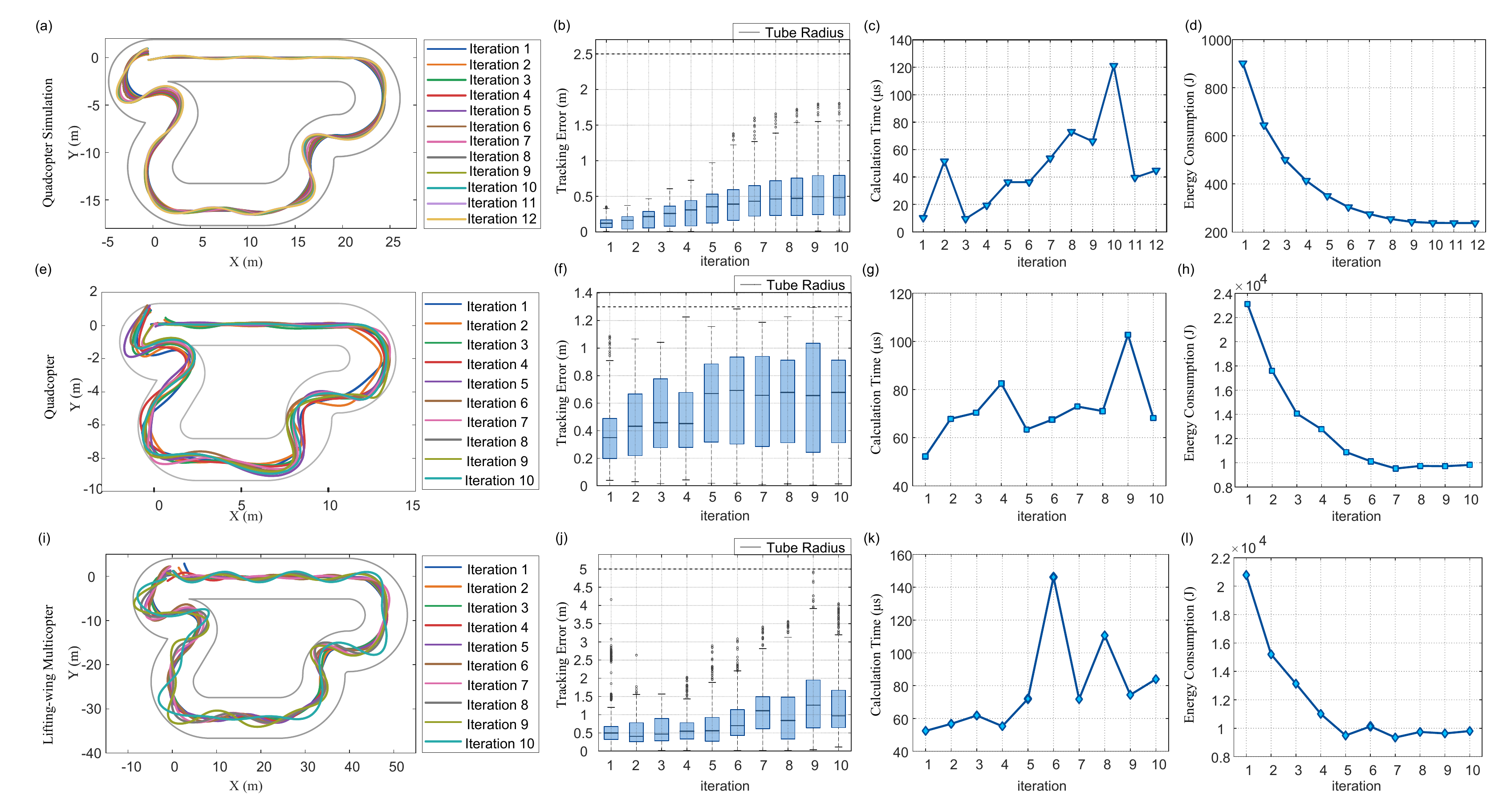}
    		\caption{Performance of the proposed algorithm in simulation and real flight. (a) Flight trajectory in simulation.
        (b) Tracking-error distribution in simulation.
        (c) Computation time for calculating the next-iteration velocity command in simulation.
        (d) Energy consumption for a single simulated flight.
        (e) Flight trajectory of the quadcopter.
        (f) Tracking-error distribution of the quadcopter.
        (g) Computation time for calculating the next-iteration velocity command of the quadcopter.
        (h) Energy consumption for a single quadcopter flight.
        (i) Flight trajectory of the lifting-wing multicopter.
        (j) Tracking-error distribution of the lifting-wing multicopter.
        (k) Computation time for calculating the next-iteration velocity command of the lifting-wing multicopter.
        (l) Energy consumption for a single lifting-wing multicopter flight.}
    		\label{sim_and_exp_road}
    	\end{figure*}
        
        \begin{figure}[htbp]
    		\centering
    		\includegraphics[scale=0.4]{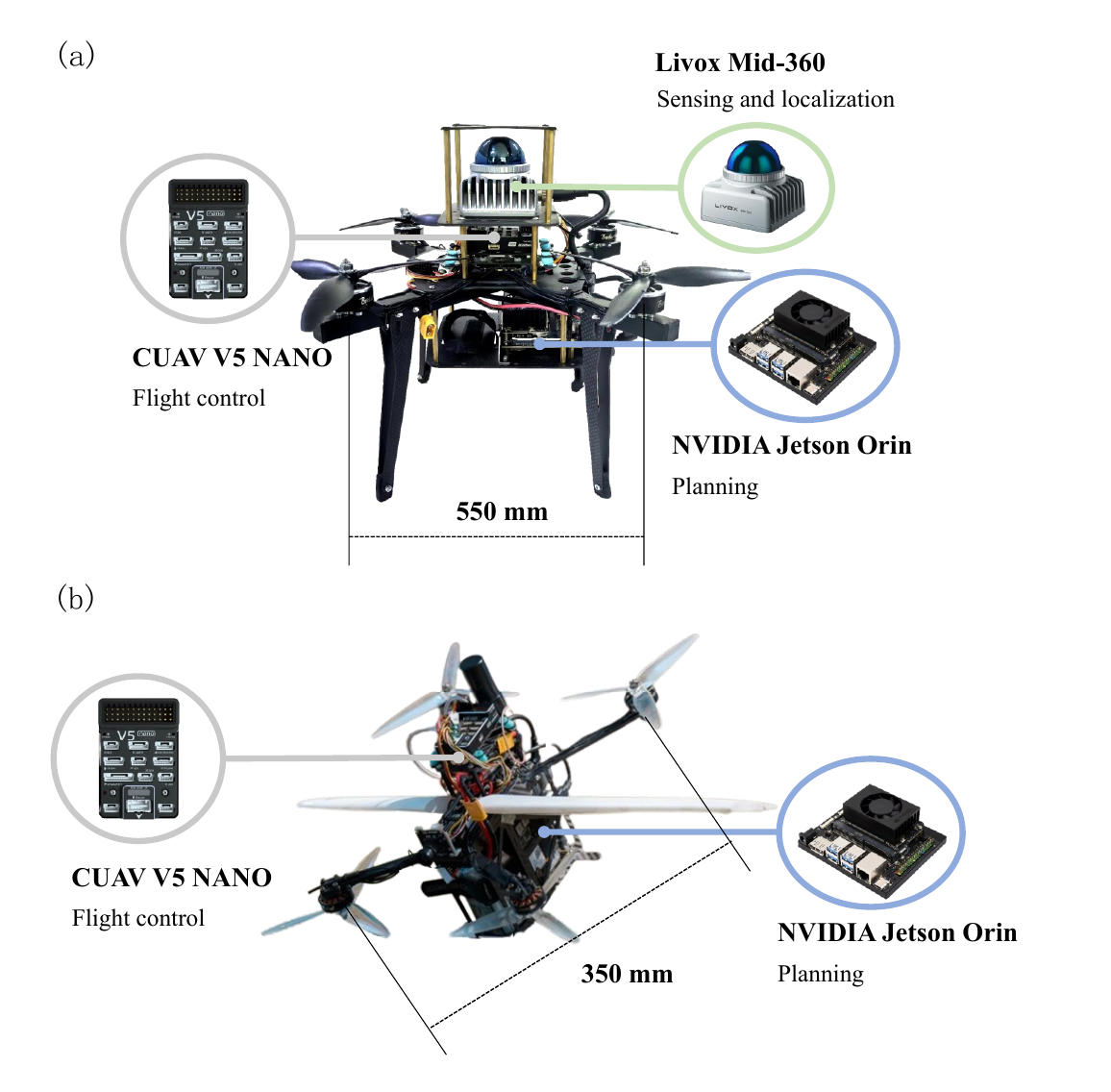}
    		\caption{ The different experimental platforms employed to verify the generalization capability of the proposed algorithm. (a) A quadcopter platform. (b) A lifting-wing multicopter platform.}
    		\label{platform}
    	\end{figure}
    	To validate energy-optimal IL under unknown UAV dynamics and energy models, software-in-the-loop (SITL) experiments are conducted using PX4 (v1.10) with Gazebo. The activation function in Eq.~\eqref{eq:nonlinear activation function} is set to $\chi(x)=1/(1+e^{-10x})-1/2$. 
    	The maximum iteration number is $k^*=12$. In the first flight, the waypoint velocity command is $1.0\,\mathrm{m/s}$, starting from zero velocity, with $v_{\text{max}}=10.0\,\mathrm{m/s}$.
    
    	Results are shown in Fig.~\ref{sim_and_exp_road}(a)--(d). 
    	The computation time for updating the next-flight velocity command ranges from $0.1\times10^{-4}$ to $1.2\times10^{-4}\,\mathrm{s}$. 
    	The energy consumption decreases from $901.85\,\mathrm{J}$ and converges to approximately $238\,\mathrm{J}$ after nine iterations. These results verify the theoretical feasibility of the proposed IL framework in achieving significant energy reduction (approx. 73.6\%) while maintaining high computational efficiency.
    }
	\subsection{Real-world Flight Experiment of a Quadcopter}
    {
    	To evaluate practical performance, outdoor autonomous flight experiments are conducted using the quadrotor platform shown in Fig.~\ref{platform}(a), equipped with an NVIDIA Jetson Orin and a Mid-360 LiDAR. 
    	Mapping and localization are achieved via Fast-LIO. 
    	Experiments are performed in a $30\,\mathrm{m}\times20\,\mathrm{m}\times5\,\mathrm{m}$ workspace with all computations executed onboard in real time. The activation function remains $\chi(x)=1/(1+e^{-10x})-1/2$, and the maximum iteration number is 10. The initial waypoint velocity is $1.0\,\mathrm{m/s}$, and $v_{\text{max}}=4.0\,\mathrm{m/s}$. The next-flight command is updated solely from measured current, voltage, velocity, and position, without prior knowledge of UAV dynamics or energy models.
    
    	As shown in Fig.~\ref{sim_and_exp_road}(e)--(h), the onboard computation time ranges from $0.5\times10^{-4}$ to $1.2\times10^{-4}\,\mathrm{s}$. 
    	The energy consumption decreases from $23122.37\,\mathrm{J}$ and converges to approximately $9700\,\mathrm{J}$ after seven iterations. The rapid convergence in real-world environments indicates that the algorithm is robust against external disturbances and measurement noise. Furthermore, the microsecond-level computation time ensures its suitability for real-time onboard optimization on resource-constrained platforms.

    }
    \subsection{Real-world Flight Experiment of a Lifting-wing Multicopter}
    {
    To further demonstrate generalization, autonomous experiments are conducted on the lifting-wing multicopter shown in Fig.~\ref{platform}(b). Unlike conventional quadcopters
    , the lifting-wing multicopter can use its wing to share part of the lift during forward flight, while the aerodynamic drag also changes with velocity. Therefore, this platform provides a useful test case for evaluating whether the proposed algorithm can optimize energy consumption without explicitly modeling the lift-drag characteristics. The UAV is equipped with an NVIDIA Jetson Orin, with positioning achieved via GPS and a laser rangefinder. Flights are performed in a $70\,\mathrm{m}\times50\,\mathrm{m}\times30\,\mathrm{m}$ outdoor workspace with full onboard real-time computation. The activation function is set to $\chi(x)=4(1/(1+e^{-10x})-1/2)$, and the maximum iteration number is 10. The initial waypoint velocity is $2.0\,\mathrm{m/s}$ and $v_{\text{max}}=10.0\,\mathrm{m/s}$.
    
    As shown in Fig.~\ref{sim_and_exp_road}(i)--(l), the computation time ranges from $0.5\times10^{-4}$ to $1.5\times10^{-4}\mathrm{s}$. The energy consumption decreases from $20787.86\,\mathrm{J}$ to approximately $9700\,\mathrm{J}$ after seven iterations. The learned velocity profile reflects the platform-dependent power-speed characteristics from measured data. Although the complete lift-drag transition is limited by the safety velocity bound, the results from the lifting-wing multicopter suggest the model-free adaptability of the proposed energy-optimal IL framework.
    }

}

\section{ CONCLUSION\label{sec: CONCLUSION}}
{
    This paper presents a model-free, energy-optimal spatial IL framework for UAV trajectory tracking without explicit energy or dynamic models. The main conclusions are:
    1) Significant energy optimization is achieved under real-world constraints, validating the model-free spatial law.
    2) The computational cost is only 1.52\%--1.92\% of IPOPT, enabling resource-constrained embedded deployment.
    3) Rapid convergence across quadcopter and lifting-wing platforms demonstrates feasibility.
    
    Operational limitations include intentionally trading tracking performance for global systemic energy efficiency within the virtual tube, and requiring actual flight iterations to optimize brand-new trajectories. The current framework is simple, but additional control objectives (e.g., stricter centerline tracking or obstacle avoidance) can be incorporated. Future work will focus on eliminating these limitations and extending the framework to multi-objective or tighter-path constraints.
    \color{black}
}
\appendices
\section{Proof of Theorem 1}
    The Taylor expansion of the performance function $P(l,v)$ at the iteration point $v_k$ is given by
    \begin{equation}
    	\begin{aligned}
    		&P(l,v_{k+1}) = P(l,v_k) + {\partial P(l,v)}/{\partial v}\big|_{v=v_k}(v_{k+1}-v_k)\\ &+\frac{1}{2}{\partial^2P(l,v)}/{\partial v^2}\big|_{v=v_k}(v_{k+1}-v_k)^2+\cdots.
    	\end{aligned}
    	\label{eq:proofA_1}
    \end{equation}
    Applying the mean value theorem, \eqref{eq:proofA_1} can be rewritten as
    \begin{equation}
    	\begin{aligned}
    		&P(l,v_{k+1}) = P(l,v_k) + {\partial P(l,v)}/{\partial v}\big|_{v=v_k}(v_{k+1}-v_k)\\ &+\frac{1}{2}{\partial^2P(l,v)}/{\partial v^2}\big|_{v=\xi }(v_{k+1}-v_k)^2,
    	\end{aligned}
    	\label{eq:proofA_2}
    \end{equation}
    where $\xi \in [v_k, v_{k+1}]$ (assuming $v_k < v_{k+1}$).
    
    According to $\textit{Assumption 1}$, the second-order derivative
    $\partial^2 P(l,v)/\partial v^2$
    is continuous on the compact set $[v_{\min},v_{\max}]$.
    By the Weierstrass extreme value theorem \cite{martinez2014weierstrass},
    there exists a constant $M>0$ such that
    \begin{equation}
    	\begin{aligned}
    		\left\| {\partial^2P(l,v)}/{\partial v^2} \right\Vert \leq M.
    	\end{aligned}
    	\label{eq:proofA_second derivative}
    \end{equation}
    Combining \eqref{eq:learning law}, \eqref{eq:nonlinear activation function}, \eqref{eq:proofA_2} and \eqref{eq:proofA_second derivative}, we obtain
    \begin{equation}
    	\begin{aligned}
    		&P(l,v_{k+1}) \leq P(l,v_k) + {\partial P(l,v)}/{\partial v}\big|_{v=v_k}(v_{k+1}-v_k)\\ &+{M}/{2}(v_{k+1}-v_k)^2.
    	\end{aligned}
    	\label{eq:proofA_3}
    \end{equation}
    Substituting the learning law into \eqref{eq:proofA_3} yields
    \begin{equation}
    	\begin{aligned}
    		&P(l,v_{k+1}) \leq P(l,v_k) +\left({Mk_{\chi}^2}/{2}-k_{\chi}\right) \left({\partial P(l,v)}/{\partial v}\big|_{v=v_k}\right)^2.\\
    	\end{aligned}
    	\label{eq:proofA_4}
    \end{equation}
    When $k_{\chi}<2/M$, the coefficient in \eqref{eq:proofA_4} is strictly negative, i.e.,
    \begin{equation}
    	\begin{aligned}
    		\left({Mk_{\chi}^2}/{2}-k_{\chi}\right)\left({\partial P(l,v)}/{\partial v}\big|_{v=v_k}\right)^2 \leq 0.
    	\end{aligned}
    	\label{eq:proofA_5}
    \end{equation}
    Therefore, $P(l,v)$ decreases monotonically with respect to the iteration index $k$.
    Since $P(l,v)$ is bounded below on a compact domain, it necessarily converges.
    Thus, as $k\to\infty$, there exist
    \begin{equation}
    	\left\{
    	\begin{aligned}
    		&\lim_{k \to \infty} v_k(l)= v^*(l),\\
    		&\lim_{k \to \infty} P(l,v_k)= P(l,v^*),\\
    		&\lim_{k \to \infty} E_k= E^*,
    	\end{aligned}
    	\right.
    	\label{eq:proofA_7}
    \end{equation}
	where $v^*(l)$ represents the optimal magnitude of tangential velocity command at position $l$, ${P}(l,v^*)$ represents the optimal energy consumption at position $l$, $E^*$ represents the optimal energy consumption.

\section{Proof of Corollary 1.1}
{
	For the error form \eqref{eq:velocity-error}, according to eq. \eqref{eq:pace controller}, perform the k-th iteration projection in the tangential direction
	\begin{equation}
		\begin{aligned}
			{v}_k(l) = {v}^*_k(l)+\delta {v}_k(l).
		\end{aligned}
		\label{eq:proofB_error}
	\end{equation}
	With Corollary 1, there exists
	\begin{equation}
		\begin{aligned}
			\frac{\delta {v}_{k+1}(l)-\delta {v}_k(l)}{{v}_{k+1}(l)-{v}_k(l)} \leq  \mu,
		\end{aligned}
		\label{eq:proofB_1}
	\end{equation}
	where $ \mu < 1$ is a constant. With \eqref{eq:learning law}, \eqref{eq:nonlinear activation function}, \eqref{eq:proofA_3}, \eqref{eq:proofA_second derivative} and \eqref{eq:proofB_1}, we obtain
	\begin{equation}
		\begin{aligned}
			&P(l,v_{k+1}) \leq P(l,v_k) + \frac{\partial P(l,v)}{\partial v}\bigg|_{v=v_k}(v_{k+1}-v_k)\\
			&=P(l,v_k) + \frac{k_{\chi}}{ 1-\mu }\left(\frac{Mk_{\chi}}{ 2(1-\mu) }-1\right)\left(\frac{\partial P(l,v)}{\partial v}\bigg|_{v=v_k}\right)^2,
		\end{aligned}
		\label{eq:proofB_2}
	\end{equation}
	when $ \frac{Mk_{\chi}}{ 2(1-\mu) }-1 < 0$, namely $ k_{\chi} < \frac{2(1-\mu )}{M}$, there always has 
	\begin{equation}
		\begin{aligned}
			\frac{k_{\chi}}{ 1-\mu }\left(\frac{Mk_{\chi}}{ 2(1-\mu) }-1\right)\left(\frac{\partial P(l,v)}{\partial v}\bigg|_{v=v_k}\right)^2 \leq  0.
		\end{aligned}
		\label{eq:proofB_5}
	\end{equation}
	Therefore, under $k_{\chi}<2(1-\mu)/M$, the same monotone-convergence argument as in Appendix~A applies, and the limiting relations in\eqref{eq:proofA_7} hold. 
}

\nocite{1}
\bibliographystyle{IEEEtran}
\bibliography{IEEEabrv,IEEEbib}

\end{document}